\newtheorem{assumption}{Assumption}
\newtheorem{lemma}{Lemma}
\newtheorem{prop}{Proposition}
\newtheorem{theorem}{Theorem}
\newcommand{\eg}{\emph{e.g.}~}
\newcommand{\ie}{\emph{i.e.}~}
\renewcommand{\d}[1]{{\rm d}{#1}}
\newcommand{\calB}{{\cal B}}
\newcommand{\calC}{{\cal C}}
\newcommand{\calT}{{\cal T}}
\newcommand{\calU}{{\cal U}}
\newcommand{\calW}{{\cal W}}
\newcommand{\calX}{{\cal X}}
\def\INTERPOLATE{\textsc{INTERPOLATE}}
\def\PARENTS{\textsc{PARENTS}}
\def\RR{\mathbb{R}}
\def\SAMPLE{\textsc{SAMPLE}}
\def\STEER{\textsc{STEER}}
\def\Uadm{{\calU_{\textsf{adm}}}}
\def\Xfree{\ensuremath {{\calX}_{\textit{free}}}}
\def\avg{{\textrm{avg}}}
\def\disc{{\textrm{disc}}}
\def\distg{\textrm{dist}_{\gamma}}
\def\d{\,{\rm d}}
\def\gdd{\ddot{\gamma}}
\def\gd{\dot{\gamma}}
\def\g{\gamma}
\def\gint{\g_{\textsf{int}}}
\def\gdint{\gd_{\textsf{int}}}
\def\gddint{\gdd_{\textsf{int}}}
\def\uint{u_{\textsf{int}}}
\def\mdd{\ddot{m}}
\def\ndqd{\norm{\Delta \qd}}
\def\ndq{\norm{\Delta q}}
\def\nqd{\norm{\qd}}
\def\qdd{{\ddot{q}}}
\def\qd{{\dot{q}}}
\def\qgoal{q_{\rm goal}}
\def\qinit{q_{\rm init}}
\def\taumax{\tau_{\textsf{max}}}
\def\xgoal{\ensuremath {x_{\textit{goal}}}}
\def\xinit{\ensuremath {x_{\textit{init}}}}
\def\xparent{\ensuremath {x_{\textit{parent}}}}
\def\xrand{\ensuremath {x_{\textit{rand}}}}
\def\xsteer{\ensuremath {x_{\textit{steer}}}}
\newcommand\norm[1]{\left\| #1 \right\|}
\newcommand\norminfty[1]{\left\| #1 \right\|_{\infty}}
\newcommand{\abs}[1]{\ensuremath {\left| #1 \right|}}
\journal{Robotics and Autonomous Systems}
\begin{document}

\begin{frontmatter}

\title{Completeness of Randomized Kinodynamic Planners with State-based Steering}

\author[ut,ca]{St\'ephane Caron}
\author[ntu]{Quang-Cuong Pham}
\author[ut]{Yoshihiko Nakamura}

\address[ut]{Department of Mechano-Informatics, The University of Tokyo, Japan.}
\address[ntu]{School of Mechanical and Aerospace Engineering, Nanyang Technological University, Singapore.}
\address[ca]{Corresponding author: \texttt{stephane.caron@normalesup.org}}
  
\begin{abstract}
    Probabilistic completeness is an important property in motion planning.
    Although it has been established with clear assumptions for geometric
    planners, the panorama of completeness results for \emph{kinodynamic}
    planners is still incomplete, as most existing proofs rely on strong
    assumptions that are difficult, if not impossible, to verify on practical
    systems. In this paper, we focus on an important class of kinodynamic
    planners, namely those that interpolate trajectories in the state space. We
    provide a proof of probabilistic completeness for these planners under
    assumptions that can be readily verified from the system's equations of
    motion and the user-defined interpolation function. Our proof relies
    crucially on a property of interpolated trajectories, termed
    \emph{second-order continuity} (SOC), which we show is tightly related to
    the ability of a planner to benefit from denser sampling. We analyze the
    impact of this property in simulations on a low-torque pendulum. Our
    results show that a simple RRT using a second-order continuous interpolation
    swiftly finds solution, while it is impossible for the same planner using
    standard Bezier curves (which are not SOC) to find any solution.\footnote{
        This paper is a revised and expanded version of~\cite{caron2014icra},
        which was presented at the \emph{International Conference on Robotics and
        Automation}, 2014.
    }
\end{abstract}

\begin{keyword}
    kinodynamic planning \sep probabilistic completeness
\end{keyword}

\end{frontmatter}

\section{Introduction}

A deterministic motion planner is said to be \emph{complete} if it returns
a solution whenever one exists~\cite{latombe1991book}. A \emph{randomized}
planner is said to be \emph{probabilistically complete} if the probability of
returning a solution, when there is one, tends to one as execution time goes to
infinity~\cite{Lav06book}. Theoretical as they may seem, these two notions are
of notable practical interest, as proving completeness requires one to
formalize the problem by hypotheses on the robot, the environment, etc. While
experiments can show that a planner works for a given robot, in a given
environment, for a given query, etc., a proof of completeness is a certificate
that the planner works for a precise \emph{set} of problems. The size of this
set depends on how strong the assumptions required to make the proof are: the
weaker the assumptions, the larger the set of solvable problems.

Probabilistic completeness has been established for systems with
\emph{geometric} constraints~\cite{kavraki1996probabilistic, Lav06book} such as
\eg obstacle avoidance~\cite{hsu2002randomized}. However, proofs for systems
with \emph{kinodynamic} constraints~\cite{lavalle2001randomized,
karaman2011sampling, hsu1997path} have yet to reach the same level of
generality. Proofs available in the literature often rely on strong assumptions
that are difficult to verify on practical systems (as a matter of fact, none of
the previously mentioned works verified their hypotheses on non-trivial
systems). In this paper, we establish probabilistic completeness
(Section~\ref{sec:completeness}) for a large class of kinodynamic planners,
namely those that interpolate trajectories in the state space. Unlike previous
works, our assumptions can be readily verified from the system's equations of
motion and the user-defined interpolation function.

The most important of these properties is \emph{second-order continuity} (SOC),
which states that the interpolation function varies smoothly and locally
between states that are close. We evaluate the impact of this property in
simulations (Section~\ref{sec:simulations}) on a low-torque pendulum.
Experiments validate our completeness theorem, and suggest that SOC is an
important design guideline for kinodynamic planners that interpolate in the
state space.

\section{Background}
\label{sec:background}

\subsection{Kinodynamic Constraints}

Motion planning was first concerned only with \emph{geometric} constraints such
as obstacle avoidance or those imposed by the kinematic structures of
manipulators~\cite{lozano1983spatial, kavraki1996probabilistic, hsu1997path,
lavalle2001randomized}. More recently, \emph{kinodynamic} constraints, which
stem from differential equations of dynamic systems, have also been taken into
account~\cite{donald1993kinodynamic, lavalle2001randomized, hsu2002randomized}.

Kinodynamic constraints are more difficult to deal with than geometric
constraints because they cannot in general be expressed using only
\emph{configuration-space variables} -- such as the joint angles of a
manipulator, the position and the orientation of a mobile robot,
etc. Rather, they involve higher-order derivatives such as velocities
and accelerations. There are two types of kinodynamic constraints:
\begin{description}
    \item[Non-holonomic constraints:] non-integrable \emph{equality}
        constraints on higher-order derivatives, such as found in wheeled
        vehicles~\cite{Lau98book}, under-actuated
        manipulators~\cite{bullo2001kinematic} or space robots.

    \item[Hard bounds:] \emph{inequality} constraints on higher-order
        derivatives such as torque bounds for manipulators~\cite{BobX85ijrr},
        support areas~\cite{wieber2002} or wrench cones for humanoid
        stability~\cite{caron2015rss}, etc.
\end{description}
Some authors have considered systems that are subject to both types of
constraints, such as under-actuated manipulators with torque
bounds~\cite{bullo2001kinematic}.

\subsection{Randomized Planners}

Randomized planners such as such as Probabilistic Roadmaps (PRM)
\cite{kavraki1996probabilistic} or Rapidly-exploring Random Trees (RRT)
\cite{lavalle2001randomized} build a roadmap on the state space. Both rely on
repeated random sampling of the free state space, \ie states with non-colliding
configurations and velocities within the system bounds. New states are
connected to the roadmap using a \emph{steering} function, which is a method
used to drive the system from an initial to a goal configuration. The steering
method may be imperfect, \eg it may not reach the goal exactly, not take
environment collisions into account, only apply to states that are sufficiently
close, etc. The objective of the motion planner is to overcome these
limitations, turning a local steering function into a global planning method.

PRM builds a roadmap that is later used to generate motions between many
initial and final states (many-to-many queries). When new samples are drawn,
they are connected to \emph{all} neighboring states in the roadmap using the
steering function, resulting in a connected graph. Meanwhile, RRT focuses on
driving the system from \emph{one} initial state $\xinit$ towards a goal area
(one-to-one queries). It grows a tree by connecting new samples to \emph{one}
neighboring state, usually their closest neighbor.

Both PRM's and RRT's \emph{extension} step are represented by
Algorithm~\ref{algo:planner}, which relies on the following sub-routines (see
Fig.~\ref{fig:extend} for an illustration):
\begin{itemize}
    \item $\SAMPLE(S)$: randomly sample an element from a set~$S$;

    \item $\PARENTS(x, V)$: return a set of states in the roadmap $V$ from
        which steering towards $x$ will be attempted;

    \item $\STEER(x, x')$: generate a system trajectory from $x$ towards $x'$.
        If successful, return a new node $\xsteer$ ready to be added to the
        roadmap. Depending on the planner, the successfulness criterion may be
        ``reach $x'$ exactly'' or ``reach a vicinity of $x'$''.
\end{itemize}

\begin{algorithm}
    \caption{Extension step of randomized planners (PRM or RRT)}
    \label{algo:planner}
    \begin{algorithmic}[1]
        \REQUIRE initial node $\xinit$, number of iterations $N$
        \STATE $(V, E) \leftarrow (\{\xinit\}, \emptyset)$
        \FOR{$N$ steps}
            \STATE $\xrand \leftarrow \SAMPLE(\Xfree)$ 
            \STATE $X_\textit{parents} \leftarrow \PARENTS(\xrand, V)$
            \FOR{$\xparent$ in $X_\textit{parents}$}
                \STATE $\xsteer \leftarrow \STEER(\xparent,\,\xrand)$
                \IF{$\xsteer$ is a valid state}
                    \STATE $V \leftarrow V \cup \{ \xsteer \}$
                    \STATE $E \leftarrow E \cup \{ (\xparent, \xsteer) \}$
                \ENDIF
            \ENDFOR
        \ENDFOR
        \RETURN $(V, E)$
    \end{algorithmic}
\end{algorithm}

\begin{figure}[t]
  \centering
  \includegraphics[height=5cm]{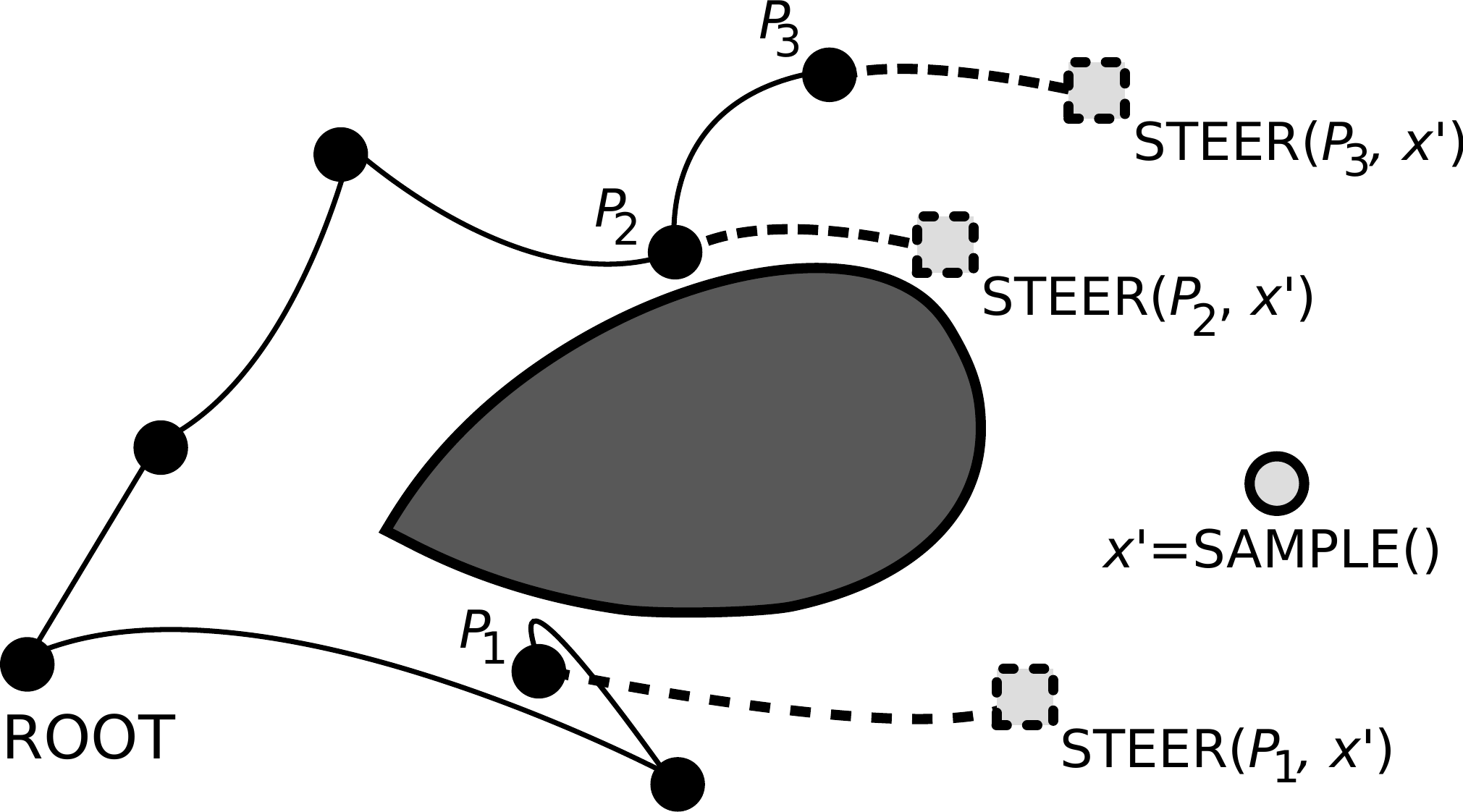}
  \caption{
      Illustration of the extension routine of randomized planners. To grow the
      roadmap toward the sample $x'$, the planner selects a number of parents
      $\PARENTS(x') = \{P_1, P_2, P_3\}$ from which it applies the $\STEER(P_i,
      x')$ method.
  }
  \label{fig:extend}
\end{figure}

The design of each sub-routine greatly impacts the quality and even the
completeness of the resulting planner. In the literature, $\SAMPLE(S)$ is
usually implemented as uniform random sampling over $S$, but some authors have
suggested adaptive sampling as a way to improve planner
performance~\cite{bialkowski2013iros}. In geometric planners, $\PARENTS(q, V)$
is usually implemented from the Euclidean norm over $\calC$ as 
\begin{equation*}
    \PARENTS(q, V) := \underset{q' \in V}{\arg\min} \ \| q' - q \|. 
\end{equation*}
This choice results in the so-called Voronoi bias of
RRTs~\cite{lavalle2001randomized}. Both experiments and theoretical analysis
support this choice for geometric planning, however it becomes inefficient for
kinodynamic planning, as was showed by Shkolnik et al.~\cite{shkolnik2009icra}
on systems as simple as the torque-limited pendulum.

\subsection{Steering Methods}
\label{sec:steering}

This paper focuses on steering functions. These can be classified into three
categories: analytical, state-based and control-based steering.

\paragraph{Analytical steering}

This category corresponds to the ideal case when one can compute analytical
trajectories respecting the system's differential constraints, which are
usually called (perfect) \emph{steering functions} in the
literature~\cite{lavalle2001randomized, papadopoulos2014arxiv}. Unfortunately,
it only applies to a handful of systems. Reeds and Shepp curves for cars are
a notorious example of this~\cite{Lau98book}.

\paragraph{Control-based steering}

Generate a control $u: [0, \Delta t] \to \Uadm$, where $\Uadm$ denotes the set
of \emph{admissible} controls, and compute the corresponding trajectory by
\emph{forward dynamics}. This approach has been called \emph{incremental
simulation}~\cite{kunz2014icra}, \emph{control
application}~\cite{lavalle2001randomized} or \emph{control-space
sampling}~\cite{papadopoulos2014arxiv} in the literature. It is widely
applicable, as it only requires forward-dynamic calculations, but usually
results in weak steering functions as the user has no or limited control over
the destination state. In works such as~\cite{lavalle2001randomized,
hsu2002randomized}, random functions $u$ are sampled from a family of
primitives (\eg piecewise-constant functions), a number of them are tried and
only the one bringing the system closest to the target is retained.
Linear-Quadratic Regulation (LQR)~\cite{perez2012lqr, tedrake2009lqr} also
qualifies as control-based steering: in this case, $u$ is computed as the
optimal policy for a linear approximation of the system given a quadratic cost
function.

\paragraph{State-based steering}

Interpolate a trajectory $\gint: [0, \Delta t] \to \calC$, for
instance a Bezier curve matching the initial and target configurations
and velocities, and compute a control that makes the system track that
trajectory. For fully-actuated system, this is typically done using
\emph{inverse dynamics}. An interpolated trajectory is rejected if no
suitable control can be found. Compared to control-based steering,
this approach applies to a more limited range of systems, but delivers
more control over the destination state.
Algorithm~\ref{algo:steering} gives the prototype of state-based
steering functions.

\begin{algorithm}
    \caption{Prototype of state-based steering functions $\STEER(x, x')$}
    \label{algo:steering}
    \begin{algorithmic}[1]
        \STATE $\gint \leftarrow \INTERPOLATE(x, x')$
        \STATE $\uint := \textsc{INVERSE\_DYNAMICS}(\gint(t), \gdint(t), \gddint(t))$
        \IF {$\forall t \in [0, \Delta t], \uint(t) \subset \Uadm$}
            \RETURN the last state of $\gint$
        \ENDIF
        \RETURN failure
    \end{algorithmic}
\end{algorithm}

\subsection{Previous works}

Randomized planners such as RRT and PRM are both simple to
implement\footnote{ For instance, the RRT used in the simulations of
  this paper was implemented in less than a hunder lines of Python
  code.  } yet efficient for geometric planning. The completeness of
these planners has been established for geometric planning in
\cite{lavalle2001randomized, karaman2011sampling, hsu1997path}. In
their proof, Hsu et al.~\cite{hsu1997path} quantified the problem of
narrow passages in configuration space with the notion of $(\alpha,
\beta)$-expansiveness. The two constants $\alpha$ and $\beta$ express
a geometric lower bound on the rate of expansion of reachability
areas.

There is, however, a gap between geometric and \emph{kinodynamic}
planning~\cite{donald1993kinodynamic} in terms of proving probabilistic
completeness. When Hsu et al. extended their solution to kinodynamic
planning~\cite{hsu2002randomized}, they applied the same notion of
expansiveness, but this time in the $\calX \times \calT$ (state and time) space
with control-based steering. Their proof states that, when $\alpha > 0$ and
$\beta > 0$, their planner is probabilistically complete. However, whether
$\alpha > 0$ or $\alpha = 0$ in the non-geometric space $\calX \times \calT$
remains an open question. As a matter of fact, the problem of evaluating
$(\alpha, \beta)$ has been deemed as difficult as the initial planning
problem~\cite{hsu1997path}. In a parallel line of work, LaValle et
al.~\cite{lavalle2001randomized} provided a completeness argument for
kinodynamic planning, based on the hypothesis of an \emph{attraction sequence},
\ie a covering of the state space where two major problems of kinodynamic
planning are already solved: steering and antecedent selection. Unfortunately,
the existence of such a sequence was not established. 

In the two previous examples, completeness is established under assumptions
whose verification is at least as difficult as the motion planning problem
itself. Arguably, too much of the complexity of kinodynamic planning has been
abstracted into hypotheses, and these results are not strong enough to hold the
claim that their planners are probabilistically complete in general. This was
exemplified recently when Kunz and Stilman~\cite{kunz2015afr} showed that RRTs
with control-based steering were \emph{not} probabilistically complete for
a family of control inputs (namely, those with fixed time step and best-input
extension). At the same time, Papadopoulos et al.~\cite{papadopoulos2014arxiv}
established probabilistic completeness for the same planner using a different
family of control inputs (randomly sampled piecewise-constant functions). The
picture of completeness for kinodynamic planners therefore seems to be
a nuanced one.

Karaman et al.~\cite{karaman2011sampling} introduced the RRT* path planner an
extended it to kinodynamic planning with differential constraints
in~\cite{karaman2010optimal}, providing a sketch of proof for the completeness
of their solution. However, they assumed that their planner had access to the
optimal cost metric and optimal local steering, which restricts their analysis
to systems for which these ideal solutions are known. The same authors tackled
the problem from a slightly different perspective in~\cite{karaman2013sampling}
where they supposed that the $\PARENTS$ function had access to $w$-weighted
boxes, an abstraction of the system's local controlability. However, they did
not show how these boxes can be computed in practice\footnote{
    The definition of $w$-weighted boxes is quite involved: it depends on the
    joint flow of vector fields spanning the tangent space of the system's
    manifold.
} and did not prove their theorem, arguing that the reasoning was similar to
the one in~\cite{karaman2011sampling} for kinematic systems.

To the best of our knowledge, the present paper is the first to
provide a proof of probabilistic completeness for kinodynamic planners
using state-based steering.

\subsection{Terminology}

A function is \emph{smooth} when all its derivatives exist and are continuous.
Let \mbox{$\|\cdot\|$} denote the Euclidean norm. A function $f : A \to B$
between metric spaces is Lipschitz when there exists a constant $K_f$ such that
\[
\forall (x, y) \in A, \ \| f(x) - f(y) \| \,\leq\, K_f \| x - y \|.
\]
The (smallest) constant $K_f$ is called the Lipschitz constant of the function
$f$.

Let $\calC$ denote $n$-dimensional configuration space, where $n$ is the number
of degrees of freedom of the robot. The \emph{state space} $\calX$ is the
$2n$-dimensional manifold of configuration and velocity coordinates $x = (q,
\qd)$. A trajectory is a continuous function $\gamma : [0, \Delta t] \to
\calC$, and the distance of a state $x \in \calX$ to a trajectory $\gamma$ is
\begin{equation*}
    \distg(x) := \min_{t \in [0, \Delta t]} \norm{(\g, \gd)(t) - x}.
\end{equation*}

A kinodynamic system can be written as a time-invariant differential system:
    \begin{equation}
    \label{eq:dynamics}
    \dot{x}(t) \ = \ f(x(t), u(t)),
    \end{equation}
where $u \in \calU$ denotes the control input and $x(t) \in \calX$. Let $\Uadm
\subset \calU$ denote the subset of admissible controls. (For instance, $\Uadm
= [\tau_{\min}, \tau_{\max}] \subset \calU = \mathbb{R}$ represents bounded
torques for a single joint.) A control function $u : [0, \Delta t] \to \calU$
has $\delta$-clearance when its image is in the $\delta$-interior of $\Uadm$,
\ie for any time $t$, $\calB(u(t), \delta) \subset \Uadm$. A trajectory
$\gamma$ that is solution to the differential system \eqref{eq:dynamics} using
only controls $u(t) \in \Uadm$ is called an \emph{admissible} trajectory. The
kinodynamic motion planning problem is to find an admissible trajectory from
$\qinit$ to $\qgoal$.

\section{Completeness Theorem}
\label{sec:completeness}

\subsection{System assumptions}

Our model for an $\calX$-state randomized planner is given by Algorithm
\ref{algo:planner} using state-based steering. We first assume that:

\begin{assumption}
    The system is fully actuated.
\end{assumption}

Full actuation allows us to write the equations of motion of the system in
generalized coordinates as:
    \begin{equation}
    \label{eq:motion}
    M(q) \qdd + C(q, \qd) \qd + g(q) = u,
    \end{equation}
where $u \in \Uadm$ and we assume that the set of admissible controls $\Uadm$
is compact. Since torque constraints are our main concern, we will focus on
\begin{equation}
    \label{Uadm}
    \Uadm := \left\{ u \in \calU, \ | u | \leq \taumax\right\},
\end{equation}
which is indeed compact.\footnote{
    The application of our proof of completeness to an arbitrary compact set
    presents no technical difficulty: one can just replace $|u| \leq \taumax$
    with $d(u, \partial \Uadm)$, with $\partial \Uadm$ the boundary of $\Uadm$.
    Using Equation~\eqref{Uadm} avoids this level of verbosity.
} (Vector comparisons are component-wise.) Finally, we suppose that forward and
inverse dynamics mappings have Lipschitz smoothness:

\begin{assumption}
    The forward dynamics function $f$ is Lipschitz continuous in both of its
    arguments, and its inverse $f^{-1}$ (the inverse dynamics function $u
    = f^{-1}(x, \dot{x})$) is Lipschitz in both of its arguments.
\end{assumption}

These two assumptions are satisfied when $f$ is given by~\eqref{eq:motion} as
long as the matrices $M(q)$ and $C(q, \qd)$ are bounded and the gravity term
$g(q)$ is Lipschitz. Indeed, for a small displacement between $x$ and $x'$,
\begin{equation}
    \label{equ:ub-ctrl}
    \norm{u' - u} 
    \leq 
    \norm{M} \norm{\qdd' - \qdd} 
    + \norm{C(q, \qd)}\norm{\qd' - \qd} 
    + K_g \norm{q' - q}
\end{equation}
Let us illustrate this on the double pendulum depicted in
Figure~\ref{fig:pendulum}. When both links have mass $m$ and length $l$, the gravity term 
\[
    g(\theta_1, \theta_2) = \frac{m g l}{2}\left[ 
    \sin\theta_1 + \sin(\theta_1 + \theta_2) \ \sin(\theta_1 + \theta_2) \right]
\]
is Lipschitz with constant $K_g = 2mgl$, while the inertial term is bounded by
$\norm{M} \leq 3 m l^2$. When joint angular velocities are bounded by $\omega$,
the norm of the Coriolis tensor is bounded by $2 \omega m l^2$.
Using~\eqref{equ:ub-ctrl}, one can therefore derive the Lipschitz constant
$K_{f^{-1}}$ of the inverse dynamics function.

\begin{figure}
  \begin{minipage}[b]{0.45\textwidth}
      \centering
      \includegraphics[height=5cm]{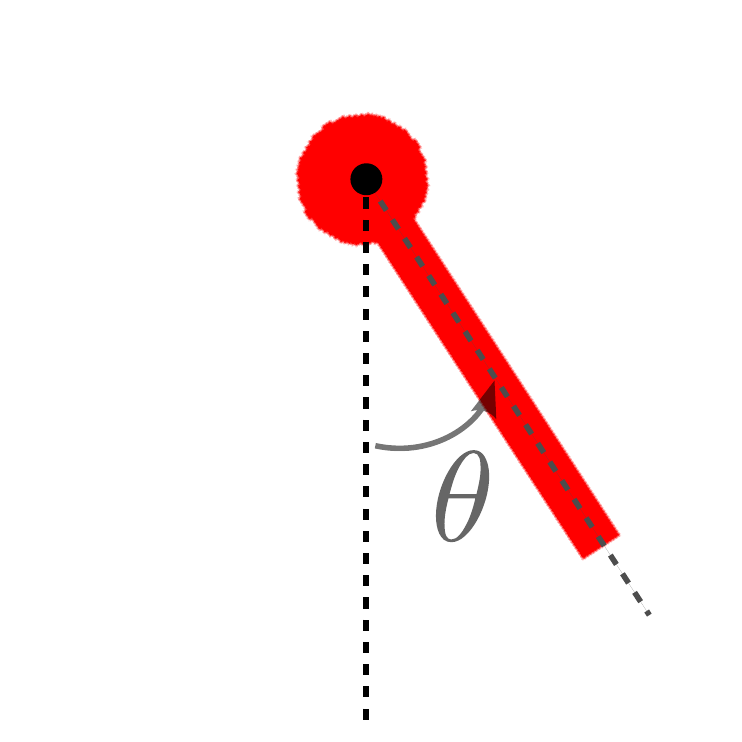}
      \centerline{(A)}\medskip
  \end{minipage}   
  \begin{minipage}[b]{0.55\textwidth}
      \centering
      \includegraphics[height=6cm]{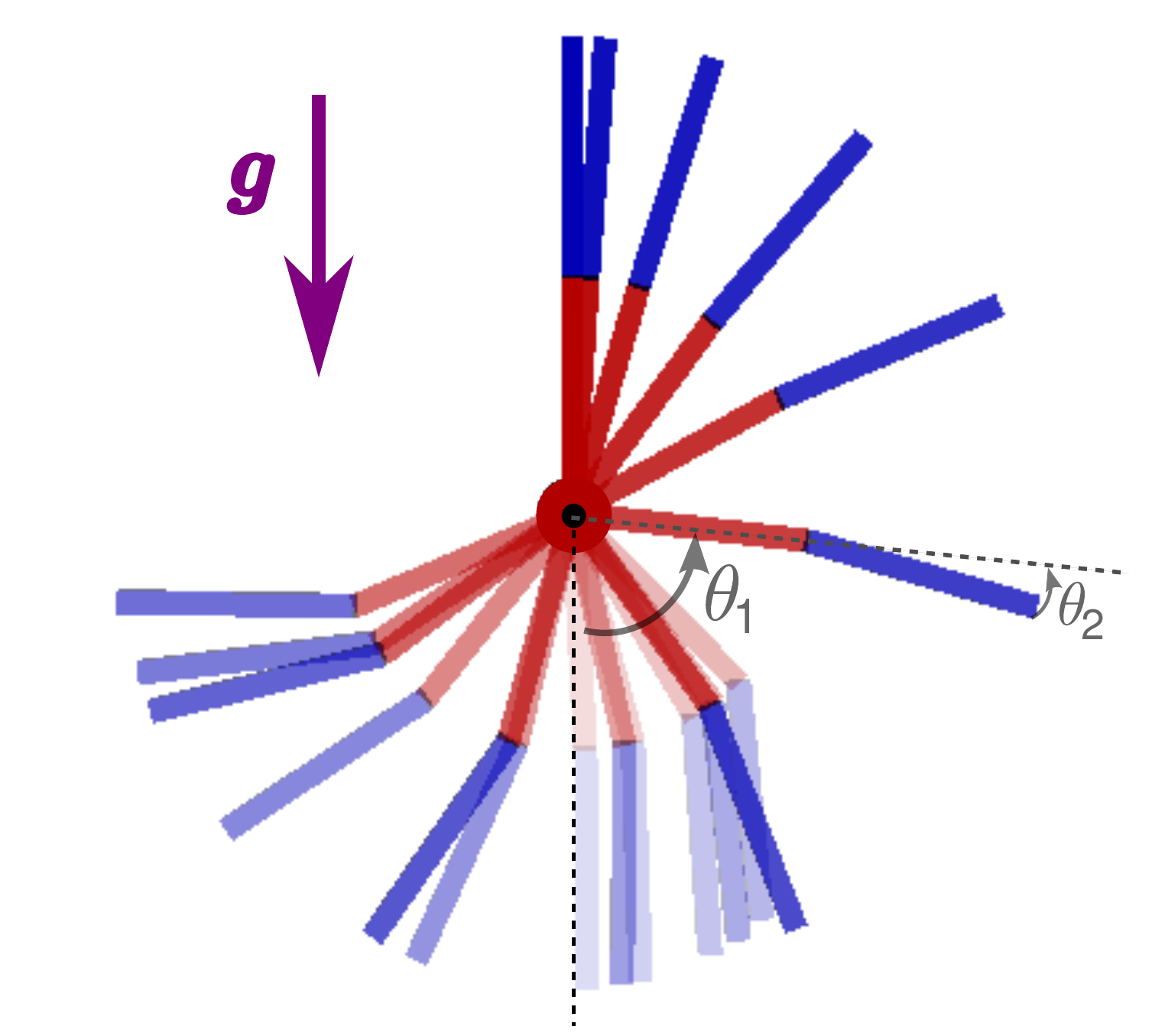}
      \centerline{(B)}\medskip
  \end{minipage}
  \caption{
      Single (A) and double (B) pendulums. Under torque bounds, these systems
      must swing back and forth several times before they can reach for the
      upright position, as depicted in (B) (lighter images represent earlier
      times).
  }
  \label{fig:pendulum}
\end{figure}

\subsection{Interpolation assumptions}

We also require smoothness for the interpolated trajectories:

\begin{assumption}
Interpolated trajectories $\gint$ are smooth Lipschitz functions, and their
time-derivatives $\gdint$ (\ie interpolated velocities) are also Lipschitz.
\end{assumption}

The following two assumptions ensure a continuous behaviour of the
interpolation procedure:

\begin{assumption}[Local boundedness]
\label{ass-int-2}
Interpolated trajectories stay within a neighborhood of their start and end
states, \ie there exists a constant $\eta$ such that, for any $(x, x') \in
\calX^2$, the interpolated trajectory $\gint :~[0, \Delta t] \to \calC$
resulting from $\INTERPOLATE(x, x')$ is included in a ball of center $x$ and
radius $\eta \norm{x' - x}$.
\end{assumption}

\begin{assumption}[Discrete-acceleration convergence]
\label{ass-int-3}
When start and end states become close, accelerations of interpolated
trajectories uniformly converge to the discrete acceleration between them, \ie
there exists some $\nu > 0$ such that, if $\gint : [0, \Delta t] \to \calC$
results from $\INTERPOLATE(x, x')$, then
    \begin{equation*}
    \forall \tau \in [0, \Delta t], \ \norm{\gddint(\tau) - \frac{\Delta \qd}{\Delta t_\disc}}
        \ \leq \ \nu \norm{\Delta x},
    \end{equation*}
where $\Delta t_\disc := {\ndq} / {\nqd}$.
\end{assumption}

\noindent Note that the expression $\frac{\Delta \qd}{\Delta t_\disc}$ above
represents the discrete acceleration between $x$ and $x'$. Its continuous
analog would be $
\frac{\norm{\qd} \d{\qd}}{\norm{\d{q}}} = 
\frac{\norm{\qd} \d{\qd}}{\norm{\qd} \d{t}} = 
\frac{\d{\qd}}{\d{t}}$.

These three assumptions ensure that the planner interpolates trajectories
locally and ``continuously'' when $x$ and $x'$ are close. We will call them
altogether \emph{second-order continuity}, where ``second-order'' refers to the
discrete acceleration encoded in small variations $(\Delta q, \Delta \qd)$.
This continuous behavior plays a key role in our proof of completeness, as it
ensures that denser sampling will allow finding arbitrarily narrow state-space
passages.

Let us consider again the example the double pendulum, for the interpolation
function $\gamma = \INTERPOLATE(x, x')$ given by
    \begin{equation}
    \label{ditch-my-polynomial}
    \begin{array}{rcl}
        \gamma: [0, \Delta t] & \to & \calC \\
        t & \mapsto &  \frac{\Delta \qd}{2 \Delta t} t^2
        + \left(\frac{\Delta q}{\Delta t} - \frac{\Delta \qd}{2} \right) t
        + q.
    \end{array}
    \end{equation}
The duration $\Delta t$ is taken as $\Delta t_{\disc}$, so that $\gamma(0)
= q$, $\gamma(\Delta t) = q'$ and $\ddot{\gamma}$ is the discrete acceleration.
This interpolation, like any polynomial function, is Lipschitz smooth;
Assumption~\ref{ass-int-3} is verified by construction, and
Assumption~\ref{ass-int-2} can be checked as follows:
\begin{eqnarray*}
\norm{\gamma(t) - \gamma(0)} 
    & \leq & t \norm{\frac{1}{2} \frac{\Delta \qd}{\Delta t}
        t + \frac{\Delta q}{\Delta t} - \frac{\Delta \qd}{2}} \\
    & \leq & \Delta t \norm{\frac{\Delta \qd}{2 \Delta t} t + \frac{\Delta
        q}{\Delta t} - \frac{\Delta \qd}{2}} \\
    & \leq & \frac{3}{2} \norm{\Delta \qd} \Delta t + \norm{\Delta q} \\
    & \leq & \norm{\Delta q} \left( 1 + \frac{\norm{\Delta
        \qd}}{\norm{\qd}}\right) \\
    & \leq & \norm{\Delta q} \left(1 + o_{\norm{\Delta x}}(1)\right).
\end{eqnarray*}

\subsection{Completeness theorem}

In order to prove the theorem, we will use the following two lemmas, which are
proved in~\ref{appendix:lemmas}.

\begin{lemma}
    \label{lemma:avg}
    Let $g : [0, \Delta t] \to \RR^k$ denote a smooth Lipschitz function. Then,
    for any $(t, t') \in [0, \Delta t]^2$,
    \begin{equation*}
        \ \left\| \dot{g}(t) - \frac{g(t') - g(t)}{|t' - t|} \right\| 
        \ \leq \ \frac{K_g}{2} | t' - t |.
    \end{equation*}
\end{lemma}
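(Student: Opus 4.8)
The plan is to prove this as a statement about how close the derivative $\dot g(t)$ at one endpoint is to the average slope of $g$ over the interval $[t,t']$. The key idea is that the quantity $\frac{g(t')-g(t)}{|t'-t|}$ is exactly the average value of $\dot g$ over that interval (by the fundamental theorem of calculus), so the difference measures the deviation of $\dot g(t)$ from its own average, which is controlled by the Lipschitz constant of $\dot g$.

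First I would write the finite difference as an integral of the derivative. Assuming without loss of generality that $t < t'$ (the case $t > t'$ being symmetric and $t = t'$ trivial), the fundamental theorem of calculus gives
\begin{equation*}
    g(t') - g(t) = \int_t^{t'} \dot g(s)\, \d s.
\end{equation*}
Dividing by $|t'-t| = t'-t$, I can rewrite the target quantity as a single integral of the form
\begin{equation*}
    \dot g(t) - \frac{g(t')-g(t)}{t'-t} = \frac{1}{t'-t}\int_t^{t'} \bigl(\dot g(t) - \dot g(s)\bigr)\, \d s.
\end{equation*}

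Next I would bound the integrand pointwise using the Lipschitz property of $\dot g$ (which holds by hypothesis, with constant $K_g$), giving $\norm{\dot g(t) - \dot g(s)} \leq K_g |s - t| = K_g (s-t)$ for $s \in [t,t']$. Pulling the norm inside the integral and integrating the elementary bound yields
\begin{equation*}
    \norm{\dot g(t) - \frac{g(t')-g(t)}{t'-t}}
    \leq \frac{1}{t'-t}\int_t^{t'} K_g (s-t)\, \d s
    = \frac{K_g}{t'-t}\cdot\frac{(t'-t)^2}{2}
    = \frac{K_g}{2}(t'-t),
\end{equation*}
which is exactly the claimed inequality with $|t'-t| = t'-t$.

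I expect no serious obstacle here; this is a routine mean-value-type estimate. The only mild care needed is handling the orientation of the interval (treating $t<t'$ and $t>t'$ uniformly via the absolute value, and noting the $t=t'$ case is vacuous once one interprets the quotient as a limit) and making sure the Lipschitz constant invoked is that of $\dot g$ rather than of $g$ itself. Smoothness of $g$ guarantees $\dot g$ is well-defined and the fundamental theorem of calculus applies.
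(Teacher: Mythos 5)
Your proof is correct and follows essentially the same route as the paper's: write the finite difference as $\frac{1}{t'-t}\int_t^{t'}\dot g(s)\,\d s$, bound $\|\dot g(t)-\dot g(s)\|\leq K_g|s-t|$ via the Lipschitz property of $\dot g$, and integrate. Your remark that $K_g$ must be read as the Lipschitz constant of $\dot g$ rather than of $g$ is the right reading of the paper's (slightly abusive) notation.
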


\begin{lemma}
    \label{lemma:minacc}
    If there exists an admissible trajectory $\g$ with
    $\delta$-clearance in control space, then there exists $\delta' <
    \delta$ and a neighboring admissible trajectory $\g'$ with
    $\delta'$-clearance in control space whose acceleration never
    vanishes, \ie such that $\| \gdd' \|$ is always greater than some
    constant $\mdd > 0$.
\end{lemma}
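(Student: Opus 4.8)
The plan is to obtain $\g'$ from $\g$ by perturbing only the acceleration profile, pushing it radially outward wherever it enters a small ball around the origin, and then integrating twice to recover velocities and positions. Writing $a(t) := \gdd(t)$ for the (measurable, almost-everywhere defined) acceleration of the reference trajectory, I would fix a floor $\mdd > 0$ and set $a'(t) := \Pi(a(t))$, where $\Pi : \RR^n \to \RR^n$ is the radial push-out
\[
    \Pi(v) \ := \
    \begin{cases}
        v & \text{if } \norm{v} \geq \mdd, \\
        \mdd \, v / \norm{v} & \text{if } 0 < \norm{v} < \mdd, \\
        \mdd \, e_1 & \text{if } v = 0,
    \end{cases}
\]
with $e_1$ any fixed unit vector. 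This map is Borel measurable and satisfies both $\norm{\Pi(v)} \geq \mdd$ and $\norm{\Pi(v) - v} \leq \mdd$ for every $v$. I would then \emph{define} $\g'$ by integration from the same initial state, $\gd'(t) := \gd(0) + \int_0^t a'(s) \d s$ and $\g'(t) := \g(0) + \int_0^t \gd'(s) \d s$, so that $\gdd' = a'$ almost everywhere and, by construction, $\norm{\gdd'(t)} \geq \mdd$ on all of $[0, \Delta t]$.

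Next I would show $\g'$ is \emph{neighboring}, with a deviation controlled linearly by $\mdd$. Since $\norm{a'(t) - a(t)} \leq \mdd$, integrating once gives $\norm{\gd'(t) - \gd(t)} \leq \Delta t \, \mdd$ and integrating again gives $\norm{\g'(t) - \g(t)} \leq \tfrac{1}{2} \Delta t^2 \mdd$; hence the state deviation $\norm{(\g', \gd')(t) - (\g, \gd)(t)}$ is $O(\mdd)$ uniformly in $t$, and for $\mdd$ small $\g'$ stays in a fixed compact neighborhood of the image of $\g$. I would then establish $\delta'$-clearance. The control tracking $\g'$ is recovered by inverse dynamics as $u'(t) = f^{-1}((\g', \gd')(t), (\gd', \gdd')(t))$, while the reference control is $u(t) = f^{-1}((\g, \gd)(t), (\gd, \gdd)(t))$; by Assumption~2 ($f^{-1}$ Lipschitz in both arguments),
\[
    \norm{u'(t) - u(t)}
    \ \leq \ K_{f^{-1}} \left( \norm{(\g', \gd')(t) - (\g, \gd)(t)} + \norm{(\gd', \gdd')(t) - (\gd, \gdd)(t)} \right)
    \ \leq \ K \mdd
\]
for a constant $K$ depending only on $K_{f^{-1}}$ and $\Delta t$, using the bounds above together with $\norm{\gdd' - \gdd} = \norm{a' - a} \leq \mdd$. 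Fixing $\delta' := \delta/2 < \delta$ and choosing $\mdd \leq \delta / (2K)$ then gives $\norm{u'(t) - u(t)} \leq \delta - \delta'$, so that $\calB(u'(t), \delta') \subset \calB(u(t), \delta) \subset \Uadm$ by the triangle inequality; thus $u'$ is admissible with $\delta'$-clearance, and $\g'$ has all the required properties.

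The main obstacle I anticipate is conceptual rather than computational: the push-out $\Pi$ is necessarily discontinuous at the origin, and for a one-degree-of-freedom system (such as the pendulum of Figure~\ref{fig:pendulum}) the scalar acceleration must change sign whenever the motion reverses, so no \emph{continuous} acceleration profile can stay bounded away from zero. The construction therefore succeeds only because admissible trajectories are defined through measurable controls, which permits $\gdd'$ to jump. I would make this explicit by working with Carathéodory solutions and noting that $a'$ is measurable and bounded, so $\gd'$ is Lipschitz and $\g'$ is a genuine admissible trajectory. A secondary point is that $\g'$ only reaches a vicinity of the endpoint of $\g$, the endpoints drifting by $O(\mdd)$; if exact endpoint matching were required one would add to $a'$ a small corrective term of vanishing mean, but the statement asks only for a neighboring trajectory, so this refinement is unnecessary.
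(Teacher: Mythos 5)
Your construction has a genuine gap, and it is precisely the point you flag as ``conceptual rather than computational.'' The push-out $\Pi$ is discontinuous at the origin, so $\gdd' = \Pi \circ \gdd$ jumps at every zero-crossing of $\gdd$; at those instants $\gd'$ has a corner and $\gdd'$ does not exist, so the conclusion ``$\norm{\gdd'}$ is \emph{always} greater than $\mdd$'' holds only almost everywhere. This is not a removable technicality, because the lemma is not a standalone statement: its output is substituted for $\g$ in the proof of Theorem~\ref{th:completeness}, where the reference trajectory must remain smooth Lipschitz (Lemma~\ref{lemma:avg} is applied with $g = \gd$, which requires $\gdd$ Lipschitz). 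More importantly, the constant $\mdd$ enters that proof only through the chain $\sin\widehat{(\Delta \qd, \Delta \gd)} \leq (\distg(x)+\distg(x'))/\norm{\Delta \gd} \leq \rho/(\mdd\,\Delta t)$, i.e., through the velocity-chord lower bound $\norm{\Delta \gd} \geq \mdd\,\Delta t$. That bound follows from $\norm{\gdd} \geq \mdd$ only when $\gdd$ is \emph{continuous}, so that its direction is nearly constant over small intervals. Your $\gdd'$ flips direction instantaneously at the former zero-crossings (in one DOF, from $+\mdd$ to $-\mdd$), so over an interval straddling a flip the increment $\Delta \gd'$ can be arbitrarily small or exactly zero. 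You have produced a trajectory satisfying the letter of the statement (a.e.) while exhibiting exactly the degeneracy the lemma exists to exclude, so it cannot be used downstream.

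The paper's own proof takes a different route that preserves what you lose: it perturbs $\gdd$ by \emph{smooth} wavelets of zero integral and arbitrarily small amplitude, applied to two coordinates $i \neq j$ simultaneously while enforcing $|\delta \gdd_i + \delta \gdd_j| \geq \epsilon_{ij} > 0$, so the perturbed acceleration stays continuous and bounded away from zero; the clearance argument (small amplitudes plus full actuation plus Lipschitz inverse dynamics) is essentially the same as yours. That said, your observation about one-DOF systems is sharp and worth keeping: a continuous acceleration of a reversing motion must vanish somewhere, so no continuity-preserving construction can work for $n = 1$. This shows the paper's wavelet argument implicitly requires $n \geq 2$ (it needs two coordinates), and that the lemma, read with smooth trajectories, does not in fact cover the single pendulum of Section~\ref{sec:simulations} --- a real weakness of the paper. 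But the fix cannot be to abandon continuity of $\gdd'$, as that invalidates the lemma's role in the completeness proof. A secondary point: your $\g'$ terminates $O(\mdd)$ away from $\xgoal$, while the induction concluding Theorem~\ref{th:completeness} needs the reference trajectory to end at $\xgoal$; matching the endpoint requires corrective terms with vanishing first \emph{and} second moments, not just vanishing mean.
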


We can now state our main theorem:

\begin{theorem}
\label{th:completeness}
Consider a time-invariant differential system \eqref{eq:dynamics} with
Lipschitz-continuous $f$ and full actuation over a compact set of admissible
controls $\Uadm$. Suppose that the kinodynamic planning problem between two
states $\xinit$ and $\xgoal$ admits a smooth Lipschitz solution $\gamma : [0,
T] \to \calC$ with $\delta$-clearance in control space. A randomized motion
planner (Algorithm \ref{algo:planner}) using a second-order continuous
interpolation is probabilistically complete.
\end{theorem}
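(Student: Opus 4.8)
The plan is to show that the reference solution can be \emph{tracked} by the roadmap: I would cover a regularized version of the solution trajectory by a finite chain of small balls, and argue that, with probability tending to one, the planner eventually populates each ball in turn until it reaches a vicinity of $\xgoal$. First I would regularize the reference. Applying Lemma~\ref{lemma:minacc} to the given $\delta$-clearance solution $\g$ yields a neighboring admissible trajectory $\g'$ with $\delta'$-clearance and non-vanishing acceleration, $\norm{\gdd'} \geq \mdd > 0$. This non-vanishing acceleration is what makes the discrete second-order quantities well behaved: it guarantees that the discrete time step $\Delta t_\disc = \ndq / \nqd$ and the discrete acceleration $\Delta \qd / \Delta t_\disc$ between nearby states on $\g'$ stay well-defined and converge uniformly to the true time increment and to $\gdd'$, so that none of the ratios appearing below degenerate.

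Next I would discretize $\g'$ into a finite sequence of waypoints $x_0 = \xinit, x_1, \dots, x_m$, with $x_m$ in the goal region, spaced by a small time step, and surround each by a ball $\calB(x_i, \rho)$ of radius $\rho$ to be fixed later. The core of the argument is a \emph{one-step propagation} lemma: there exist $\rho > 0$ and $p > 0$ such that, whenever the roadmap already contains a state $y_i \in \calB(x_i, \rho)$, a single extension step of Algorithm~\ref{algo:planner} adds a new state inside $\calB(x_{i+1}, \rho)$ with probability at least $p$. Uniform sampling places $\xrand$ in $\calB(x_{i+1}, \rho)$ with positive probability; conditioned on that, $y_i$ is attempted as a parent (for PRM it is among the returned neighbors by construction, while for RRT it is the nearest tree node once $\rho$ is small and consecutive balls are separated). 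Because the interpolation matches endpoints, if $\STEER(y_i, \xrand)$ succeeds it returns $\xrand \in \calB(x_{i+1}, \rho)$ exactly, so no endpoint drift occurs and the frontier advances.

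It then remains to show that $\STEER(y_i, \xrand)$ succeeds, \ie that the control $\uint$ obtained by inverse dynamics from $(\gint, \gdint, \gddint)$ stays in $\Uadm$. Since the start and end states lie within $\rho$ of $x_i$ and $x_{i+1}$, Assumption~\ref{ass-int-2} keeps the interpolated curve $\gint$ inside the clearance tube around $\g'$. Combining Lemma~\ref{lemma:avg} (the discrete acceleration approximates $\gdd'$ up to a term linear in the step size) with Assumption~\ref{ass-int-3} (the interpolated acceleration $\gddint$ approximates that discrete acceleration up to $\nu \norm{\Delta x}$), and then passing through the Lipschitz continuity of $f^{-1}$, the computed control $\uint(t)$ differs from the reference control of $\g'$ by a quantity that can be driven below the margin $\delta'$ by choosing the waypoint spacing and $\rho$ small enough. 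Hence $\uint(t) \in \Uadm$ for all $t$, and the steering succeeds uniformly along the chain.

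Finally I would chain the steps. Starting from $x_0 = \xinit$, which is in the roadmap from the outset, the event ``advance the frontier from ball $i$ to ball $i+1$'' has probability at least $p$ at each iteration where the frontier sits at $i$, so the number of iterations required to advance is stochastically dominated by a geometric variable; with $m$ finite, a standard geometric-chain (Borel--Cantelli) argument shows the probability that all $m$ advances have occurred tends to one as $N \to \infty$, at which point the roadmap reaches $\calB(\xgoal, \rho)$ and a solution is found. The main obstacle is precisely the control-admissibility estimate of the third paragraph: one must propagate three independent error sources -- the deviation of the sampled endpoints from $\g'$ (of order $\rho$), the discrete-versus-true acceleration gap (Lemma~\ref{lemma:avg}), and the interpolated-versus-discrete acceleration gap (Assumption~\ref{ass-int-3}) -- through the inverse dynamics and show their sum stays below $\delta'$ \emph{uniformly} along $\g'$. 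This is exactly where second-order continuity and the non-vanishing acceleration of Lemma~\ref{lemma:minacc} are indispensable: without Assumption~\ref{ass-int-3}, the interpolated acceleration need not track the discrete acceleration as states become close, so denser sampling would fail to shrink the control error and the chain of balls could never be traversed.
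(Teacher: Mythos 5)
Your proposal follows essentially the same route as the paper's proof: regularize the reference via Lemma~\ref{lemma:minacc}, bound the interpolated control against the $\delta$-clearance margin by propagating the three error sources (endpoint deviation of order $\rho$, the discrete-vs-true acceleration gap from Lemma~\ref{lemma:avg}, and the interpolated-vs-discrete gap from Assumption~\ref{ass-int-3}) through the Lipschitz inverse dynamics, then conclude by induction along a chain of positive-volume balls covering the trajectory. The only cosmetic differences are that you invoke Lemma~\ref{lemma:minacc} upfront rather than mid-estimate and phrase the final chaining as a geometric-domination argument, so the proposal is correct and matches the paper's argument.
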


\noindent \emph{Proof.} Let $\g : [0, \Delta t] \to \calC, t \mapsto \g(t)$ denote
a smooth Lipschitz \emph{admissible} trajectory from $\xinit$ to $\xgoal$, and
$u : [0, \Delta t] \to \Uadm$ its associated control trajectory with
$\delta$-clearance in control space. Consider two states $x=(q, \qd)$ and $x'
= (q', \qd')$, as well as their corresponding time instants on the trajectory
\begin{eqnarray*}
    t & := & \arg\min_t \norm{(\g(t), \gd(t)) - x}, \\
    t' & := & \arg\min_t \norm{(\g(t), \gd(t)) - x'}.
\end{eqnarray*}
Supposing without loss of generality that $t' > t$, we denote by $\Delta t = t'
- t$ and $\Delta t_\disc = \nqd / \ndq$. Given a sufficiently dense sampling of
the state space, we suppose that $\distg(x) \leq \rho$ and $\distg(x') \leq
\rho$ for a radius $\rho$ such that $\rho / \Delta t = O(\Delta t)$ and
$\rho / \Delta t_\disc = O(\Delta t)$; \ie the radius $\rho$ is quadratic in
the time difference.

Let $\gint : [0, \Delta t] \to \calC$ denote the result of the interpolation
between $x$ and $x'$. For $\tau \in [0, \Delta t]$, the torque required to
follow the trajectory $\gint$ is $\uint(\tau) := f(\gint(\tau), \gdint(\tau),
\gddint(\tau))$. Since $u$ has $\delta$-clearance in control space,
\begin{eqnarray*}
    \abs{\uint(\tau)} 
    & \leq & \abs{\uint(\tau) - u(t)} + \abs{u(t)} \\
    & \leq & \abs{f(\gint(\tau), \gdint(\tau), \gddint(\tau)) - f(\g(t), \gd(t), \gdd(t))} + (1 - \delta)\,\taumax,
\end{eqnarray*}
(As previously, vector inequalities are component-wise.) Let us denote by
$|\widetilde{\uint}|$ the first term of this inequality. We will now show that
$|\widetilde{\uint}| = O(\Delta t) \to 0$ when $\Delta t \to 0$, and therefore that
$|\uint(\tau)| \leq \taumax$ for a small enough $\Delta t$ (\ie when sampling
density is high enough). Let us first rewrite it as follows:
\begin{eqnarray*}
    |\widetilde{\uint}| & = & \abs{f(\gint(\tau), \gdint(\tau), \gddint(\tau))
- f(\g(t), \gd(t), \gdd(t))} \\ 
    & \leq & \norminfty{f(\gint(\tau), \gdint(\tau), \gddint(\tau)) - f(\g(t), \gd(t), \gdd(t))} \\ 
    & \leq & K_f \norm{(\gint(\tau), \gdint(\tau)) - (\g(t), \gd(t))} + K_f \norm{\gddint(\tau) - \gdd(t)}  \\
    & \leq & \underbrace{K_f \left[ (\eta + \nu) \norm{\Delta x} + \distg(x) \right]}_\text{position-velocity term (PV)} + \underbrace{K_f \norm{\frac{\norm{\qd}}{\norm{\Delta q}} \Delta \qd - \gdd(t)}}_\text{acceleration term (A)}.
\end{eqnarray*}
The replacement of the norm $\norm{\cdot}$ by $\norminfty{\cdot}$ is possible
because all norms of $\RR^n$ are equivalent (a change in norm will be reflected
by a different constant $K_f$). The transition from the second to the third row
uses Lipschitz smoothness of $f$, as well as the triangular inequality to
separate position-velocity and acceleration coordinates. The transition from
the third to the fourth row relies on the two interpolation assumptions: local
boundedness (yields the $\eta$ factor in the distance term) and convergence to
the discrete-acceleration (yields the $\nu$ factor in the distance term, as
well as the acceleration term).

The position-velocity term (PV) satisfies:
\begin{equation*}
    (\text{D}) \ \leq \ (2 \rho + \norm{\Delta \g}) (\eta + \nu) + \rho
    \ \leq \ \frac12 K_\g (\eta + \nu) \Delta t + (1 + 2 (\eta + \nu)) \rho.
\end{equation*}
Since $\rho = O(\Delta t)$, we have $(\text{PV}) = O(\Delta t)$ and thus
$|\widetilde{u}| \leq (\text{A}) + O(\Delta t)$. Next, the difference (A) can
be bounded as:
    \begin{eqnarray*}
    (\text{A})
    & \leq & 
    \underbrace{
        \norm{
        \Delta \qd \frac{\norm{\qd}}{\norm{\Delta q}}
        - \Delta \gd \frac{\norm{\gd(t)}}{\norm{\Delta \g}}
        } 
    }_{(\Delta)} +
    \underbrace{
        \frac{\norm{\Delta \gd}}{\norm{\Delta \g}} 
        \left| \norm{\gd(t)} - \frac{\norm{\Delta \g}}{\Delta t} \right|
    }_\text{(A')} \\
        & + & 
    \underbrace{
        \norm{ \frac{\Delta \gd}{\Delta t} - \gdd(t)}.
    }_\text{(A'')}
    \end{eqnarray*}
From Lemma~\ref{lemma:avg}, the two terms (A') and (A'') satisfy:
\begin{eqnarray*}
    (\text{A'}) & \leq & \frac{K_{\gd}}{2} \frac{\norm{\Delta \gd}}{\norm{\Delta \g}} \Delta t \ = \ O(\Delta t), \\
    (\text{A''}) & \leq & \frac{K_{\gd}}{2} \Delta t \ = \ O(\Delta t),
\end{eqnarray*}
where the first upper bound $O(\Delta t)$ comes from the fact that
$\frac{\norm{\Delta \gd}}{\norm{\Delta \g}} \underset{\Delta t \to 0}{\sim}
\Delta t$. We now have $|\widetilde{u}| \leq (\Delta) + O(\Delta t)$. The term
($\Delta$) can be seen as the deviation between the discrete accelerations of
$\gint$ and $\g$. Let us decompose it in terms of norm and angular deviations:
\begin{eqnarray*}
    (\Delta) 
    & \leq & \left\| \left( \frac{\Delta \gd}{\norm{ \Delta \gd}} - \frac{\Delta \qd}{\norm{\Delta \qd}} \right) \frac{\norm{\gd} \norm{\Delta \gd}}{\norm{\Delta\g}} + \frac{\Delta \qd}{\norm{\Delta \qd}} \left( \frac{\norm{\Delta \gd} \norm{\gd}}{\norm{\Delta \g}} - \frac{\norm{\Delta \qd} \norm{\qd}}{\norm{\Delta q}} \right) \right\| \\
    & \leq & \underbrace{ 2 \frac{\norm{\gd} \norm{\Delta \gd}}{\norm{\Delta \g}} \left( 1 - \cos\widehat{(\Delta \qd, \Delta \gd)} \right) }_\text{angular deviation term ($\theta$)} + \underbrace{ \left| \frac{\norm{\gd} \norm{\Delta \gd}}{\norm{\Delta \g}} - \frac{\norm{\Delta \qd} \norm{\qd}}{\norm{\Delta q}} \right| }_\text{norm deviation term (N)}
\end{eqnarray*}
The factor $\frac{2 \norm{\gd} \norm{\Delta \gd}}{\norm{\Delta \g}}$ before
$(\theta)$ is $O(1)$ when $\Delta t \to 0$, while simple vector geometry then
shows that
\begin{equation*}
    \sin\widehat{(\Delta \qd, \Delta \gd)}
    \ \leq \ \frac{\distg(x) + \distg(x')}{\norm{\Delta \gd}}
    \ \leq \ \frac{\rho}{\mdd \Delta t},
\end{equation*}
where $\mdd := \min_t \norm{\gdd(t)}$. From Lemma~\ref{lemma:minacc}, we
can assume this minimum acceleration to be strictly positive. Then, it follows
from $\rho = O(\Delta t^2)$ that the sine above is $O(\Delta t)$. Recalling
the fact that $1 - \cos\theta < \sin\theta$ for any $\theta \in [0, \pi/2]$, we
have $(\theta) = O(\Delta t)$. 

Finally,
\begin{eqnarray*}
    (\text{N}) 
    & \leq & \frac{\norm{\Delta \gd}}{\norm{\Delta \g}} \abs{\norm{\gd} - \nqd} + \nqd \abs{\frac{\norm{\Delta \gd}}{\norm{\Delta \g}} - \frac{\ndqd}{\ndq}} \\
    & \leq & O(\Delta t \cdot \rho) + \nqd \frac{(\ndq + \ndqd)\,O(\rho)}{\ndq (\ndq + O(\rho))} \\
    & \leq & O(\Delta t \cdot \rho) + \frac{\nqd \rho}{\ndq + O(\rho)}
 + \frac{\|\qd\| \ndqd}{\ndq} \frac{O(\rho)}{\ndq + O(\rho)}
    \end{eqnarray*}
Where we used the fact that $\norm{\Delta \g} \leq \distg(x) + \ndq + \distg(x')
= \ndq + O(\rho)$, and similarly for $\|\Delta \gd\|$.  Because $\| \Delta q \|
= \| \qd \| \Delta t_\disc + O(\Delta t_\disc^2)$ and $\rho / \Delta t_\disc
= O(\Delta t)$, the last two fractions are $O(\Delta t)$, so our last term
$(\text{N}) = O(\Delta t)$.

Overall, we have derived an upper bound $|u(\tau)| \leq (1 - \delta)
\taumax + O(\Delta t)$. As a consequence, there exists a constant
$\delta t > 0$ such that, whenever $\Delta t \leq \delta t$,
interpolated torques satisfy $|u| \leq \taumax$ and the interpolated
trajectory $\gint = \INTERPOLATE(x, x')$ is admissible. Note that the
constant $\delta t$ is uniform, in the sense that it does not depend on
the index $t$ on the trajectory.

\paragraph{Conclusion of the Proof} We have effectively constructed the
attraction sequence conjectured in~\cite{lavalle2001randomized}. We can now
conclude the proof similarly to the strategy sketched in that paper. Let us
denote by $\calB_t := \calB((\g, \gd)(t), \delta \rho)$, the ball of radius
$\delta \rho$ centered on $(\g, \gd)(t) \in \calX$, where $\delta \rho
= O(\delta t^2)$ as before. Suppose that the roadmap contains a state $x \in
\calB_t$, and let $t' := t + \delta t$. If the planner samples a state $x' \in
\calB_{t'}$, the interpolation between $x$ and $x'$ will be successful and $x'$
will be added to the roadmap. Since the volume of $\calB_{t'}$ is non-zero,
the event $\{ \SAMPLE(\Xfree) \in \calB_{t'}\}$ will happen with probability
one as the number of extensions goes to infinity. At the initialization of the
planner, the roadmap is reduced to $\xinit = (\g(0), \gd(0))$. Therefore,
using the property above, by induction on the number of time steps $\delta t$,
the last state $\xgoal = (\g(T), \gd(T))$ will be eventually added to the
roadmap with probability one, and the planner will find an admissible
trajectory connecting $\xinit$ to $\xgoal$.~$\blacksquare$

\section{Completeness and state-based steering in practice}
\label{sec:simulations}

Shkolnik et al.~\cite{shkolnik2009icra} showed how RRTs could not be directly
applied to kinodynamic planning due to their poor expansion rate at the
boundaries of the roadmap. They illustrated this phenomenon on the planning
problem of swinging up a (single) pendulum vertically against gravity. Let us
consider the same system, \ie the 1-DOF single pendulum depicted in
Figure~\ref{fig:pendulum} (A), with length $l = 20$ cm and mass $m = 8$ kg. It
satisfies the system assumptions of Theorem~\ref{th:completeness} \emph{a
fortiori}, as we saw that they apply to the double pendulum.

We assume that the single actuator of the pendulum, corresponding to the joint
angle $\theta$ in Figure~\ref{fig:pendulum}, has limited actuation power: $|
\tau | \leq \tau_{\max}$. The static equilibrium of the system requiring the
most torque is given at $\theta = \pm \pi / 2$ with $\tau = \frac12
l m g \approx 7.84$ Nm. Therefore, when $\tau_{\max} < 7.84$ Nm, it is
impossible for the system to raise upright directly, and the pendulum rather
needs to swing back and forth to accumulate kinetic energy before it can swing
up. For any $\tau_{\max} > 0$, the pendulum can achieve the swingup in a finite
number of swings $N$, with $N \to \infty$ as $\tau_{\max} \to 0$.

\subsection{Bezier interpolation}

A common solution~\cite{jolly2009ras, vskrjanc2010ras, hauser2013rss} to
connect two states $(q, \qd)$ and $(q', \qd')$ is the cubic Bezier curve (also
called ``Hermit curve'') which is the quadratic function $B(t)$ such that $B(0)
= q$, $\dot{B}(0) = \qd$, $B(T) = q'$ and $\dot{B}(T) = q'$, where $T$ is the
fixed duration of the interpolated trajectory. Its expression is given by:
\begin{equation*}
    B(t) \ = \ \frac{- 2 \Delta q + T (\qd + \qd')}{T^3} \, t^3 
    + \frac{3 \Delta q - 2 \qd - \qd'}{T^2} \, t^2 
    + \qd t + q
\end{equation*}
This interpolation is straightforward to implement, however it does not verify
our Assumption~\ref{ass-int-3}, as for instance
\begin{equation}
    \label{acc-vel-tie}
    \ddot{B}(0) \ = \frac{6 \Delta q - 4 \qd - 2 \qd'}{T^2} 
    \ \xrightarrow{\Delta x \to 0} \ \frac{-6 \qd}{T^2} 
    \ \neq \ 0.
\end{equation}
Our proof of completeness does not apply to such interpolators: even though
a feasible trajectory is sampled as closely as possible $(\Delta x \to 0)$, the
interpolated acceleration will \emph{not} approximate the smooth acceleration
underlying the feasible trajectory. 

\begin{prop}
    \label{prop:bezier}
    A randomized motion planner interpolating pendulum trajectories by Bezier
    curves with a fixed duration $T$ cannot find non-quasi static solutions by
    increasing sampling density.
\end{prop}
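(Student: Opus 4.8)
The plan is to re-run the proof of Theorem~\ref{th:completeness} and pinpoint exactly where it breaks for Bezier curves. In that proof, completeness follows because the acceleration term $(\text{A})$ vanishes as the sampling becomes denser, which rests entirely on Assumption~\ref{ass-int-3}. Since cubic Bezier curves violate that assumption (equation~\eqref{acc-vel-tie}), I expect the acceleration term to converge to a \emph{strictly positive} limit, so that the interpolation torque no longer converges to the admissible torque of the reference trajectory. Making this quantitative yields a characterization of the states that dense sampling can actually connect, and the proposition then follows by showing that this set excludes the high-speed states any non-quasi-static solution must traverse.

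Concretely, I would write the pendulum dynamics as $M\qdd + \tfrac12 m g l\sin q = \tau$ with $|\tau|\le\taumax$, and reproduce the decomposition of $|\widetilde{\uint}|$ from the theorem, but keep $T$ fixed while letting two connected states $x=(q,\qd)$ and $x'=(q',\qd')$ satisfy $\norm{x'-x}\to 0$. By \eqref{acc-vel-tie} the interpolated acceleration then converges to $\ddot B(0)\to -6\qd/T^2$ rather than to the reference acceleration $\gdd$, so the torque required at the endpoint converges to $\tau_T(q,\qd):=-6M\qd/T^2+\tfrac12 m g l\sin q$, a fixed function of the endpoint state and $T$ that is blind to $\gdd$. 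A \emph{necessary} condition for the edge to be accepted in the dense limit is therefore $|\tau_T(q,\qd)|\le\taumax$, i.e. the endpoint must lie in the band $\calA_T:=\{(q,\qd):|\tau_T(q,\qd)|\le\taumax\}$. Thus, however dense the sampling, every state added through a short edge — the only edges that densification creates — must lie in $\calA_T$.

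I would then call a solution \emph{quasi-static} when it stays inside $\calA_T$ (equivalently, when its speed never exceeds the $T$-dependent ceiling implied by $|\tau_T|\le\taumax$), and \emph{non-quasi-static} otherwise. By the previous step, any roadmap path assembled from density-driven connections is confined to $\calA_T$, so increasing the sampling density can never extend the roadmap into the complement of $\calA_T$, and non-quasi-static solutions are unreachable by densification. For the swing-up with $\taumax<\tfrac12 m g l$ this is conclusive: crossing $q=\pi/2$ inside $\calA_T$ would require $6M\qd/T^2\in[\tfrac12 m g l-\taumax,\,\tfrac12 m g l+\taumax]$, pinning the crossing speed to a $T$-dependent value that the torque-limited pendulum cannot match on an admissible upward trajectory, so no dense path reaches the upright state and the planner finds nothing — recovering the stronger ``no solution at all'' statement.

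The main obstacle is rigor around two points. First, I am using only the endpoint torque; for fixed $T$ the Bezier curve between nearby states overshoots by an excursion of order $\norm{\qd}\,T$, so a complete proof must control the torque along the \emph{whole} edge, not merely at its ends. I would bound this excursion via the (failed) analogue of Assumption~\ref{ass-int-2} and argue that the interior only tightens the constraint, so that $\calA_T$ remains an outer description of the reachable short-edge states. Second, I must justify the phrase ``by increasing sampling density'': densification contributes exactly the short edges analyzed above (those whose endpoints converge, as in Theorem~\ref{th:completeness}), whereas long, high-speed edges exist or not independently of the iteration count $N$. The claim is precisely that the completeness mechanism — which for second-order continuous interpolation lets denser sampling thread arbitrarily narrow passages — is here inert, because the limiting torque $\tau_T$ is frozen by the endpoint velocity and cannot be steered toward $\gdd$.
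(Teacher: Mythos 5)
Your identification of the obstruction is the same as the paper's: equation~\eqref{acc-vel-tie} forces the limiting interpolated acceleration toward $-6\qd/T^2$ regardless of the reference trajectory, so admissibility of short edges confines the densification-reachable states to a velocity band (your $\calA_T$), and the proposition reduces to showing that every swingup must leave that band. Your treatment of the first half is in fact more careful than the paper's: the explicit outer set $\calA_T$, the remark that the torque must be controlled along the whole edge rather than only at its endpoints, and the separation between short density-driven edges and long edges are all points the published proof glosses over.

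The gap is in the closing step. You assert that crossing $q=\pi/2$ inside $\calA_T$ pins the speed to a narrow interval around $m g l T^2/(12 M)$ and that the torque-limited pendulum ``cannot match'' this speed on an admissible upward trajectory --- but no reason is given, and the claim is false in general: for suitable $T$, $m$, $l$, $\taumax$ that crossing speed is perfectly attainable by an admissible trajectory. What is actually needed, and what the paper supplies, is an energetic \emph{lower} bound on the speeds that any swingup must attain: every swingup trajectory passes through $|\dot{\theta}| > \dot{\theta}_{\textrm{swingup}}(\taumax)$, and this threshold increases to a strictly positive limit (greater than $\sqrt{8g/l}$) as $\taumax \to 0$, whereas the Bezier admissibility constraint (evaluated near $\theta=0$, where the swingup speed is largest and gravity contributes nothing) caps $|\dot{\theta}|$ by a quantity of order $\taumax$. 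The ceiling falls below the floor for sufficiently small $\taumax$, which is the regime in which the proposition's ``non-quasi-static'' clause is meant to be read. Without that lower bound your argument establishes only that short-edge-reachable states lie in $\calA_T$, not that every solution must leave $\calA_T$; you should import the paper's energetic footnote (or an equivalent work--energy estimate) to close the argument.
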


\begin{proof}
When actuation power decreases, the pendulum needs to store kinetic energy in
order to swing up, which implies that all swingup trajectories go through
velocities $|\dot{\theta}| > \dot{\theta}_{\textrm{swingup}}(\tau_{\max})$. The
function $\dot{\theta}_{\textrm{swingup}}$ increases to a positive limit
$\dot{\theta}_{\textrm{swingup}}^{\textrm{lim}}$ as $\tau_{\max} \to 0$,
where $\dot{\theta}_{\textrm{swingup}}^{\textrm{lim}} > \sqrt{8 g / l}$
from energetic considerations.\footnote{
    The expression $\dot{\theta} = \sqrt{8 g / l}$ corresponds to the kinetic
    energy $\frac14 m l \dot{\theta}^2 = m g l$, the latter being the
    (potential) energy of the system at rest in the upward equilibrium. During
    a successful last swing, the kinetic energy at $\theta = 0$ is $\frac14
    m l \dot{\theta}_{\textrm{swingup}}^2 + \calW_g + \calW_\tau = m g l$, with
    $\calW_g < 0$ the work of gravity and $\calW_\tau$ the work of actuation
    forces between $\theta=0$ and $\theta=\pi$. The work $\calW_\tau$ vanishes
    when $\tau_{\max} \to 0$.
}
Yet, feasible accelerations are also bound by $| \ddot{\theta}| \leq
K \tau_{\max}$ for some constant $K > 0$. Combining both observations in
\eqref{acc-vel-tie} yields:
\begin{equation*}
    K \tau_{\max} \geq 6 \frac{|\dot{\theta}|}{T^2} \ > \ 6 \frac{\dot{\theta}_{\textrm{swingup}}(\tau_{\max})}{T^2} 
    \quad \Rightarrow \quad 
    \dot{\theta}_{\textrm{swingup}}(\tau_{\max}) \leq \frac{KT^2}{6} \tau_{\max}.
\end{equation*}
Since the planner uses a constant $T$ and $\dot{\theta}_{\textrm{swingup}}$
increases to $\dot{\theta}_{\textrm{swingup}}^{\textrm{lim}} > \sqrt{8 g / l}$
when $\tau_{\max}$ decreases to 0, this inequality cannot be satisfied for
arbitrary small actuation power $\tau_{\max}$. Hence, even with an arbitrarily
high sampling density around a feasible trajectory $\g(t)$, the planner will
not be able to reconstruct a feasible approximation $\gint(t)$.
\end{proof}

\subsection{Second-order continuous interpolation}

Let $\qd_\avg := \frac12 (\qd + \qd')$ denote the average velocity between $(q,
\qd)$ and $(q', \qd')$. Since the system has only one degree of freedom, one
can interpolate trajectories that comply with our Assumption~\ref{ass-int-3}
using constant accelerations with a suitable trajectory duration:
\begin{equation*}
    \begin{array}{rcl}
        C : [0, \Delta t_C] & \to & ]-\pi, \pi] \\
        t & \mapsto & C(t) = q + t \qd + \frac{t^2}{2} (\Delta \qd / \Delta t_C).
    \end{array}
\end{equation*}
One can check that choosing $\Delta t_C = (\Delta q / \qd_\avg)$ results in
$\dot{C}(0)=\qd$, $\dot{C}(\Delta t_C)=\qd'$, $C(0) = q$ and $C(\Delta t_C)=q'$. This
duration is similar to the term $\Delta t_\disc$ in Assumption~\ref{ass-int-3},
with both expressions converging to the same value as $\Delta x \to 0$. We call $C(t)$
the \emph{second-order continuous 1-DOF} (SOC1) interpolation.

Note that this interpolation function only applies to single-DOF systems. For
multi-DOF systems, the correct duration $\Delta t_C$ used to transfer from one state
to another is different for each DOF, hence constant accelerations cannot be
used. One can then apply optimization techniques~\cite{perez2012lqr,
pham2013kinodynamic} or use a richer family of curves such as piecewise
linear-quadratic segments~\cite{hauser2010icra}.

\subsection{Comparison in simulations}

According to Theorem~\ref{th:completeness} and our previous discussion,
a randomized planner based on Bezier interpolation is not expected to be
probabilistically complete as $\tau_{\max} \to 0$, while the same planner using
the SOC1 interpolation will be complete at any rate. We asserted this statement
in simulations of the pendulum with RRT~\cite{lavalle2000rapidly}.

Our implementation of RRT is that described in Algorithm~\ref{algo:planner},
with the addition of the \emph{steer-to-goal} heuristic: every $m=100$ steps,
the planner tries to steer to $\xgoal$ rather than $\xrand$. This extra step
speeds up convergence when the system reaches the vicinity of the goal area.
We use uniform random sampling for $\SAMPLE(S)$, while for $\PARENTS(x',V)$
returns the $k=10$ nearest neighbors of $x'$ in the roadmap $V$. All the source
code used in these experiments can be accessed at~\cite{code}.

\begin{figure}[p]
  \centering
  \includegraphics[height=7cm]{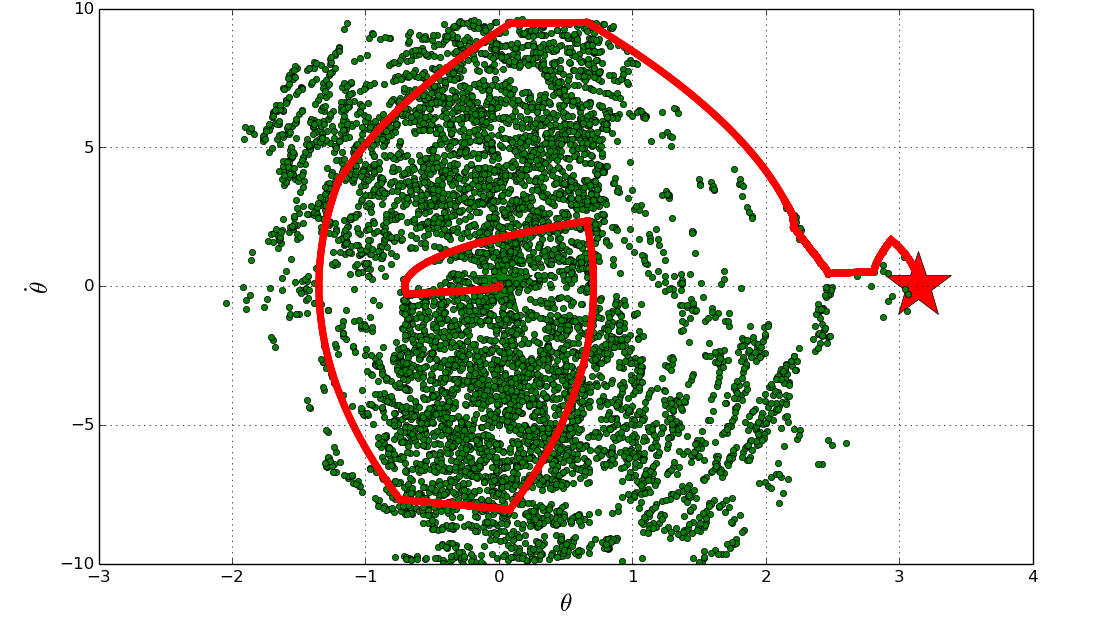}
  \caption{
      Phase-space portrait of the roadmap constructed by RRT using the
      second-order continuous (SOC1) interpolation. The planner found
      a successful trajectory (red line) after 26,300 extensions. This planner
      is probabilistically complete (Theorem~\ref{th:completeness}) thanks to
      the fact that SOC1 curves satisfy Assumption~\ref{ass-int-3}.
  }
  \label{fig:rrt-dac1}
\end{figure}
\begin{figure}[p]
  \centering
  \includegraphics[height=7cm]{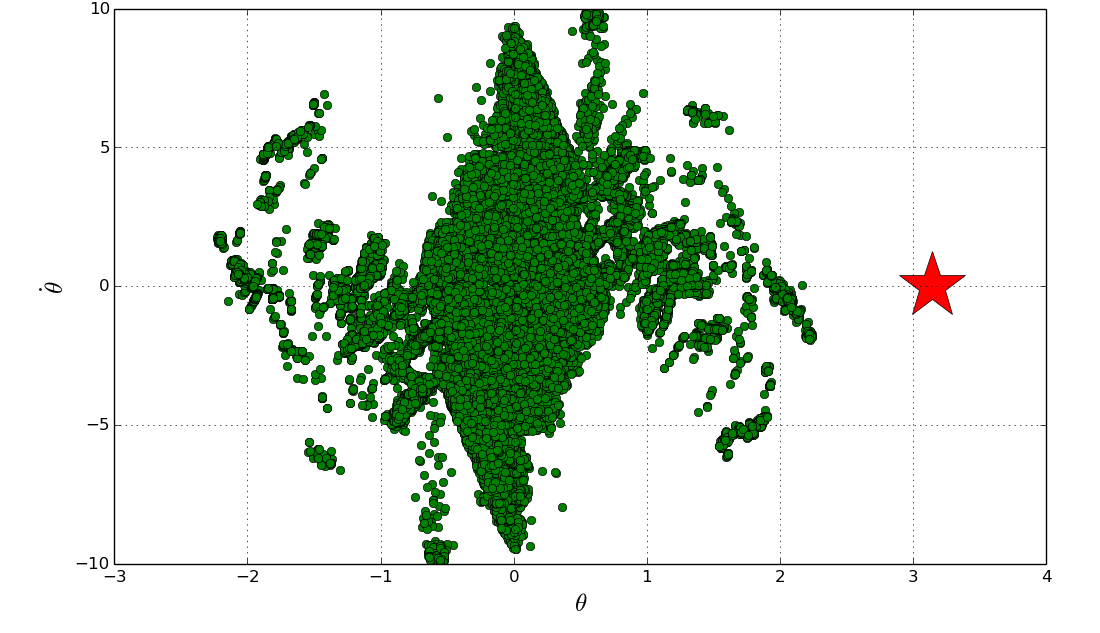}
  \caption{
      Roadmap constructed by RRT after 100,000 extensions using the Bezier
      interpolation. Reachable states are distributed in two major areas:
      a central, diamond shape corresponding to the states that the planner can
      connect at any rate, and two cones directed towards the goal ($\theta
      = \pi$ or $\theta=-\pi$). Even after several days of computations, this
      planner could not find a successful motion plan. Our completeness
      theorem does not apply to this planner because Bezier curves do not
      satisfy Assumption~\ref{ass-int-3}.
  }
  \label{fig:rrt-bezier}
\end{figure}

We compared the performance of RRT with the Bezier and SOC1 interpolations, all
other parameters being the same, on a single pendulum with $\tau_{\max}
= 5$~Nm. The RRT-SOC1 combo found a four-swing solution after 26,300 RRT
extensions, building a roadmap with 6434 nodes (Figure~\ref{fig:rrt-dac1}).

Meanwhile, even after one day of computations and more than 200,000 RRT
extensions, the RRT-Bezier combo could not find any solution.
Figure~\ref{fig:rrt-bezier} shows the roadmap at 100,000 extensions (26,663
nodes). Interestingly, we can distinguish two zones in this roadmap. The first
one is a dense, diamond-shape area near the downward equilibrium $\theta=0$. It
corresponds to states that are straightforward to connect by Bezier
interpolation, and as expected from Proposition~\ref{prop:bezier}, velocities
$\dot{\theta}$ in this area decrease sharply with $\theta$. The second one
consists of two cones directed torwards the goal. Both areas exhibit a higher
density near the axis $\dot{\theta} = 0$, which is also consistent with
Proposition~\ref{prop:bezier}.

The comparison of the two roadmaps is clear: with a second-order continuous
interpolation, the RRT-SOC1 planner leverages additional sampling into
exploration of the state space. Conversely, RRT-Bezier lacks this property
(Proposition~\ref{prop:bezier}), and its roadmap stays confined to a subset of
the pendulum's reachable space.

\section{Conclusion}

In this paper, we provided the first ``operational'' proof of probabilistic
completeness for a large class of randomized kinodynamic planners, namely those
that interpolate state-space trajectories. We observed that an important
ingredient for completeness is the ``continuity'' of the interpolation
procedure, which we characterized by the \emph{second-order continuity} (SOC)
property. In particular, we found in simulation experiments that this property
is critical to planner performances: a standard RRT with second-order
continuous interpolation has no difficulty finding swingup trajectories for
a low-torque pendulum, while the same RRT with Bezier interpolation (which are
not SOC) could not find any solution.  This experimentally confirms our
completeness theorem and suggests that second-order continuity is an important
design guideline for kinodynamic planners with state-based steering.

\subsection*{References}

\bibliographystyle{elsarticle-num}
\bibliography{refs}

\newpage
\appendix

\section{Proofs of the lemmas}
\label{appendix:lemmas}

\setcounter{lemma}{0}  

\begin{lemma}
Let $g : [0, \Delta t] \to \RR^k$ denote a smooth Lipschitz function. Then, for any
$(t, t') \in [0, \Delta t]^2$,
    \begin{equation*}
    \ \left\| \dot{g}(t) - \frac{g(t') - g(t)}{|t' - t|} \right\| 
        \ \leq \ \frac{K_g}{2} | t' - t |.
    \end{equation*}
\end{lemma}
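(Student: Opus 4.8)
The plan is to read this as the integral (Taylor) form of the mean-value estimate for the finite-difference approximation of a derivative. First I would fix the notational point that actually drives the bound: here $K_g$ must denote the Lipschitz constant of the \emph{derivative} $\dot g$ (equivalently, since $g$ is smooth, the uniform bound $\norm{\ddot g(s)} \leq K_g$), since it is this constant that forces the right-hand side to vanish linearly in $|t' - t|$. I would then assume without loss of generality that $t' > t$, so that $|t' - t| = t' - t$ and the difference quotient is the forward one; the case $t' < t$ follows by the same computation after exchanging the roles of the two endpoints.

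The core step is to express the finite difference as the average of $\dot g$ over $[t, t']$ via the fundamental theorem of calculus, namely $\frac{g(t') - g(t)}{t' - t} = \frac{1}{t' - t} \int_t^{t'} \dot g(s) \, \d s$. Writing the constant vector $\dot g(t)$ as its own average $\dot g(t) = \frac{1}{t' - t} \int_t^{t'} \dot g(t) \, \d s$ and subtracting, I get the identity $\dot g(t) - \frac{g(t') - g(t)}{t' - t} = \frac{1}{t' - t} \int_t^{t'} \big[ \dot g(t) - \dot g(s) \big] \, \d s$. Taking norms, passing the norm inside the integral by the triangle inequality, and applying the Lipschitz bound $\norm{\dot g(t) - \dot g(s)} \leq K_g (s - t)$ valid for $s \in [t, t']$, I would conclude $\norm{ \dot g(t) - \frac{g(t') - g(t)}{t' - t} } \leq \frac{K_g}{t' - t} \int_t^{t'} (s - t) \, \d s = \frac{K_g}{2}(t' - t)$, which is exactly the claimed estimate with $|t' - t| = t' - t$.

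I do not expect a genuine obstacle here: once the averaging identity is written down this is a one-line integral estimate, and it is the routine ingredient behind the terms $(\text{A}')$ and $(\text{A}'')$ in the main proof. The only points that require care are (i) making explicit that the constant appearing in the bound is the Lipschitz modulus of $\dot g$ rather than of $g$, so that the hypothesis ``smooth Lipschitz with Lipschitz derivative'' (as in Assumption~\ref{ass-int-3}) is what is actually used, and (ii) the orientation convention hidden in the absolute value $|t' - t|$, which I resolve by the WLOG reduction above. As an equivalent and slightly more mechanical route, I could invoke Taylor's theorem with integral remainder, $g(t') = g(t) + \dot g(t)(t' - t) + \int_t^{t'} (t' - s)\, \ddot g(s) \, \d s$, and then use $\norm{\ddot g(s)} \leq K_g$ together with $\int_t^{t'} (t' - s) \, \d s = \frac{(t' - t)^2}{2}$ to obtain the same constant $\frac{K_g}{2}$; I would present the averaging argument as the primary one since it relies only on Lipschitz continuity of $\dot g$.
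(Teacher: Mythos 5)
Your proof is correct and follows essentially the same route as the paper: write the difference quotient as the average of $\dot g$ over $[t,t']$, subtract $\dot g(t)$ as its own average, push the norm inside the integral, and apply the Lipschitz bound on $\dot g$ to integrate $(s-t)$. Your remark that $K_g$ must be read as the Lipschitz constant of $\dot g$ (not of $g$) is a fair and worthwhile clarification of the paper's notation, but the argument itself is identical.
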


\begin{proof}
For $t' > t$,
    \begin{eqnarray*}
    \left\| \dot{g}(t) - \frac{g(t') - g(t)}{t' - t} \right\|
    & \leq & \frac{1}{t' - t} \left \| 
        \int_t^{t'} (\dot{g}(t) - \dot{g}(w)) \d{w}
        \right\| \\
    & \leq & \frac{1}{t' - t} \int_t^{t'} \norm{\dot{g}(t) - \dot{g}(w)} \d{w} \\
    & \leq & \frac{K_g}{t' - t} \int_t^{t'} | t - w | \d{w} \\
    & \leq & \frac{K_g}{2} (t' - t). \qedhere
    \end{eqnarray*}
\end{proof}

\begin{lemma}
    If there exists an admissible trajectory $\g$ with $\delta$-clearance in
    control space, then there exists $\delta' < \delta$ and a neighboring
    admissible trajectory $\g'$ with $\delta'$-clearance in control space which
    is always accelerating, \ie such that $\| \gdd' \|$ is always greater than
    some constant $\mdd > 0$.
\end{lemma}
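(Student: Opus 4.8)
\emph{Proof idea.} The plan is to obtain $\g'$ as an arbitrarily small smooth perturbation of $\g$ and to use full actuation to keep it admissible. First I would observe that, because the system is fully actuated, \emph{any} smooth trajectory $\g'$ is admissible in the sense that it is generated by the control $u' := f^{-1}(\g', \gd', \gdd')$ obtained by inverse dynamics; the only question is whether $u'$ stays inside $\Uadm$. Since $f^{-1}$ is Lipschitz (Assumption~2), if $\g'$ is uniformly $C^2$-close to $\g$ --- say $\norm{\g'-\g}$, $\norm{\gd'-\gd}$ and $\norm{\gdd'-\gdd}$ are all bounded by some small $\epsilon$ --- then $\norm{u'-u} \leq K_{f^{-1}}\,\epsilon$ uniformly in time. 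As $u$ has $\delta$-clearance, choosing $\epsilon$ small enough guarantees that $u'$ has $\delta'$-clearance for any prescribed $\delta' < \delta$. The lemma thus reduces to a purely kinematic statement: \textbf{find a smooth perturbation $\g'$ of $\g$, arbitrarily small in the $C^2$ norm, whose acceleration $\gdd'$ never vanishes on $[0,T]$.}

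For the construction I would use a finite-dimensional family of perturbations. Fix smooth bump functions $\phi_1, \dots, \phi_m : [0,T] \to \calC$ vanishing to second order at the endpoints (so that the boundary states $\xinit, \xgoal$ are preserved), and set $\g_v := \g + \sum_i v_i \phi_i$ for $v \in \RR^m$, whose acceleration is $\gdd_v = \gdd + \sum_i v_i \ddot\phi_i$. Consider the evaluation map $F(t, v) := \gdd_v(t)$ from $[0,T] \times \RR^m$ to $\calC \simeq \RR^n$. If the bumps are chosen so that, at every time $t$, the vectors $\ddot\phi_1(t), \dots, \ddot\phi_m(t)$ span $\RR^n$, then $F$ is a submersion and $F^{-1}(0)$ is a submanifold of dimension $1 + m - n$. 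A Sard-type argument then shows that the projection of $F^{-1}(0)$ onto $v$-space has measure zero as soon as $n \geq 2$; consequently, arbitrarily close to $v = 0$ one finds $v$ for which $\gdd_v(t) \neq 0$ for all $t$. Taking $\g' := \g_v$ and $\mdd := \min_t \norm{\gdd'(t)}$, which is strictly positive by continuity of $\gdd'$ and compactness of $[0,T]$, completes the construction.

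The main obstacle is precisely this dimension count in the transversality step. For $n \geq 2$ the origin has codimension at least two in $\calC$, so a generic one-parameter curve $t \mapsto \gdd_v(t)$ misses it and the perturbation can be made as small as desired; but for a single degree of freedom ($n=1$) the acceleration is a scalar that must change sign whenever the velocity has an interior extremum, so no small perturbation can keep $\gdd'$ away from $0$ globally. I would therefore expect to treat the low-dimensional case separately, by splitting $[0,T]$ at the (generically isolated) zeros of $\gdd$ and applying the interpolation argument piecewise: on each subinterval the acceleration is bounded away from zero, while near an isolated zero the velocity is locally almost constant, so that a quasi-static interpolation connects neighbouring states without requiring $\mdd > 0$. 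The remaining bookkeeping --- checking that the bumps can be chosen with the spanning property while keeping $\norminfty{\gdd_v - \gdd}$ small, and propagating the clearance loss $\delta \to \delta'$ --- is routine once this case distinction is in place.
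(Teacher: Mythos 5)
Your reduction to a purely kinematic statement via the Lipschitz inverse dynamics is exactly the (implicit) first move of the paper's proof, but from there the two arguments diverge. The paper performs explicit surgery on the acceleration: it first adds zero-integral wavelets of arbitrarily small amplitude to destroy any interval on which $\gdd \equiv 0$, so that the zeros of $\gdd$ become a finite set, and then, near each isolated root, adds a pair of wavelets on two distinct coordinates $i, j$ constrained so that $|\delta\gdd_i + \delta\gdd_j| \geq \epsilon_{ij} > 0$, eliminating that root without introducing new ones. Your parametric-transversality/Sard argument reaches the same conclusion in one step, without first isolating the zeros, at the cost of heavier machinery; it is arguably the cleaner route, since the paper's claim that the two-wavelet step ``does not introduce new roots'' is asserted rather than proved, whereas genericity handles this automatically. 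Two remarks. First, both arguments secretly require $n \geq 2$ (the paper needs two coordinates $i \neq j$), and your observation that the statement fails for $n = 1$ --- a scalar acceleration must vanish at any interior velocity extremum --- is correct and points at a gap the paper does not acknowledge, even though its experimental system is a 1-DOF pendulum; your proposed piecewise patch is sensible but lies outside what either proof actually delivers. Second, a small technical point in your construction: to preserve the boundary states $(q, \qd)$ you only need $\phi_i$ and $\dot\phi_i$ to vanish at the endpoints; if $\ddot\phi_i$ also vanished there, the vectors $\ddot\phi_1(t), \dots, \ddot\phi_m(t)$ could not span $\RR^n$ at $t = 0$ or $t = T$, so you should take, e.g., $\phi_i(t) = t^2 (T-t)^2 e_{k(i)}$ rather than bumps flat to second order.
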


\begin{proof}
    If there is a time interval $[t, t']$ on which $\gdd \equiv 0$, suffices to
    add a wavelet function $\delta \gdd_i$ of arbitrary small amplitude $\delta
    \qdd_i$ and zero integral over $[t, t']$ to generate a new trajectory $\gdd
    + \delta \gdd$ where the acceleration cancels on at most a discrete number
    of time instants. Adding accelerations $\delta \gdd_i$ directly is possible
    thanks to full actuation, while $\delta'$-clearance can be achieved for
    $\delta' \leq \delta$ by taking sufficiently small amplitudes $\delta
    \qdd_i$.

    Suppose now that the roots of $\gdd$ form a discrete set $\{ t_0, t_1,
    \ldots, t_m \}$. Let $t_0$ be one of these roots, and let $[t, t']$ denote
    a neighbordhood of $t_0$. Repeat the process of adding wavelet functions
    $\delta \gdd_i$ and $\delta \gdd_j$ of zero integral over $[t, t']$ and
    arbitrary small amplitude to two coordinates $i$ and $j$, but this time
    enforcing that the sum of the two wavelets satisfies $| \delta \gdd_i
    + \delta \gdd_j| \geq \epsilon_{ij} > 0$. This method ensures that the root
    $t_0$ is eliminated (either $\gdd_i(t_0) \neq 0$ or $\gdd_j(t_0) \neq 0$)
    without introducing new roots. We conclude by iterating the process on the
    finite set of roots.
\end{proof}

\end{document}